\definecolor{codegreen}{rgb}{0,0.6,0}
\definecolor{codegray}{rgb}{0.5,0.5,0.5}
\definecolor{codepurple}{rgb}{0.58,0,0.82}
\lstdefinestyle{mystyle}{
    commentstyle=\color{codegreen},
    keywordstyle=\color{magenta},
    numberstyle=\tiny\color{codegray},
    stringstyle=\color{codepurple},
    basicstyle=\ttfamily\footnotesize,
    breakatwhitespace=false,         
    breaklines=true,                 
    captionpos=b,                    
    keepspaces=true,                 
    numbers=left,                    
    numbersep=5pt,
    showspaces=false,                
    showstringspaces=false,
    showtabs=false,                  
    tabsize=2
}
\definecolor{diffstart}{named}{gray}
\definecolor{diffincl}{named}{green}
\definecolor{diffrem}{named}{orange}
\lstdefinelanguage{diff}{
basicstyle=\ttfamily\small,
morecomment=[f][\color{diffstart}]{@@},
morecomment=[f][\color{diffincl}]{+\ },
morecomment=[f][\color{diffrem}]{-\ },
}
\newtheorem{theorem}{Theorem}
\theoremstyle{plain}
\newtheorem*{customthm}{Theorem 1}
\newtheorem*{customthm2}{Theorem 2}
\theoremstyle{definition}
\theoremstyle{remark}
\title{Learning Advanced Self-Attention for Linear Transformers \\in the Singular Value Domain}
\author{
Hyowon Wi$^1$
\and
Jeongwhan Choi$^2$\And
Noseong Park$^1$
\affiliations
$^1$Korea Advanced Institute of Science and Technology (KAIST)\\ 
$^2$Yonsei University\\
\emails
hyowon.wi@kaist.ac.kr,
jeongwhan.choi@yonsei.ac.kr,
noseong@kaist.ac.kr
}
\begin{document}

\maketitle

\begin{abstract}
Transformers have demonstrated remarkable performance across diverse domains. The key component of Transformers is self-attention, which learns the relationship between any two tokens in the input sequence. Recent studies have revealed that the self-attention can be understood as a normalized adjacency matrix of a graph. Notably, from the perspective of graph signal processing (GSP), the self-attention can be equivalently defined as a simple graph filter, applying GSP using the value vector as the signal. However, the self-attention is a graph filter defined with only the first order of the polynomial matrix, and acts as a low-pass filter preventing the effective leverage of various frequency information. Consequently, existing self-attention mechanisms are designed in a rather simplified manner. Therefore, we propose a novel method, called \underline{\textbf{A}}ttentive \underline{\textbf{G}}raph \underline{\textbf{F}}ilter (AGF), interpreting the self-attention as learning the graph filter in the singular value domain from the perspective of graph signal processing for directed graphs with the linear complexity w.r.t. the input length $n$, i.e., $\mathcal{O}(nd^2)$. In our experiments, we demonstrate that AGF achieves state-of-the-art performance on various tasks, including Long Range Arena benchmark and time series classification.
\end{abstract}

\section{Introduction}
Transformers~\cite{vaswani2017attention} have achieved great success in many fields, including computer vision~\cite{touvron2021deit,liu2021swin}, time series analysis~\cite{li2019enhancing,wu2021autoformer,zhou2021informer}, natural language processing~\cite{nangia2018listops,maas2011text,radford2019gpt2,devlin2019bert}, and many other works~\cite{xiong2021nystromformer,wang2020linformer,yu2024learning,kim2024polynomial}. Many researchers agree that the self-attention mechanism plays a major role in the powerful performance of Transformers. The self-attention mechanism employs a dot-product operation to calculate the similarity between any two tokens of the input sequence, allowing all other tokens to be attended when updating one token.

\begin{figure}[t]
\centering
\subfigure[Vanilla self-attention in Transformers]{\includegraphics[width=0.9\columnwidth]{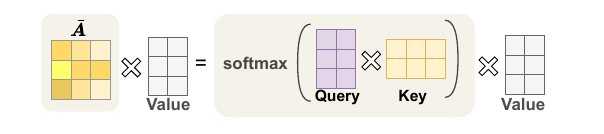}}
\subfigure[Graph signal processing on undirected graph]{\includegraphics[width=0.9\columnwidth]{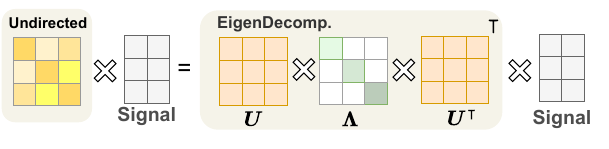}}
\subfigure[Graph signal processing on directed graph]{\includegraphics[width=0.9\columnwidth]{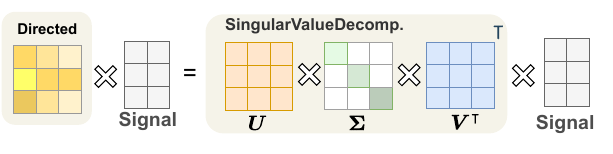}}
\caption{Illustration of the vanilla self-attention in Transformers and graph signal processing: (a) shows the softmax operation after the dot-product of query and key vectors, followed by the multiplication with the value vector; (b) and (c) show graph signal processing in undirected and directed graphs, respectively. For undirected graphs, the signal is filtered in the eigenvalue domain, while for directed graphs, the signal is filtered in the \emph{singular value domain}.}
\label{fig:sa_vs_gsp} 
\end{figure}

\paragraph{Linear Transformers approximate the self-attention.} However, the self-attention requires quadratic complexity over the input length to calculate the cosine similarity between any two tokens. This makes the self-attention difficult to apply to inputs with long lengths. In order to process long sequences, therefore, reducing the complexity of self-attention has become a top-priority goal, leading to the proposal of approximating the self-attention with linear complexity~\cite{child2019generating,ravula2020etc,zaheer2020bigbird,beltagy2020longformer,wang2020linformer,katharopoulos2020lineartransformer,choromanski2020performer,shen2021efficient,xiong2021nystromformer,qin2022cosformer,chen2023primal}. However, existing self-attention with linear complexity aims to create a matrix that is close to the original self-attention map $\bar{\mathbf{A}}$.

\paragraph{The self-attention is a low-pass filter (see Thm.~\ref{thm:sa_is_lpf}).} The self-attention map is a matrix that represents the relationship between every pair of tokens as scores normalized to probability by softmax. Considering each token as a node and the attention score as an edge weight, self-attention is considered as a normalized adjacency matrix from the perspective of the graph~\cite{shi2022revisiting,wang2022anti,choi2024GFSA}. Therefore, the self-attention plays an important role in the message passing scheme by determining which nodes to exchange information with. Given that the self-attention operates as a normalized adjacency matrix, it is intuitively aligned with graph signal processing (GSP)~\cite{ortega2018gsp,marques2020dgsp,chien2021GPRGNN,defferrard2016chebnet}, which employs graph structures to analyze and process signals. Fig.~\ref{fig:sa_vs_gsp} illustrates the self-attention in Transformers and signal filtering in GSP. As depicted in Fig.~\ref{fig:sa_vs_gsp} (a), the original self-attention takes the softmax of the output from the dot product of the query vector and the key vector, and then multiplies it by the value vector. Fig.~\ref{fig:sa_vs_gsp} (b) illustrates a general GSP method that applies the graph Fourier transform to the signal, conducts filtering in the spectral domain, and subsequently restores it to the original signal domain. In GSP, a signals are filtered through the graph filters, which are generally approximated by a matrix polynomial expansion. The self-attention mechanism in Transformers can be viewed as a graph filter $\mathbf{H}$ defined with only the first order of the polynomial matrix, i.e., $\mathbf{H}=\mathbf{\bar{A}}$. Furthermore, since the self-attention is normalized by softmax and functions as a low-pass filter (see Theorem.~\ref{thm:sa_is_lpf}), the high-frequency information in the value vector is attenuated, preventing the effective leverage of various frequency information. Consequently, existing self-attention mechanisms are designed in a rather simplified manner.

\paragraph{Our proposed linear Transformer learns an advanced self-attention (see Thm.~\ref{thm:coeff}).} Although the approximation of linear Transformers is successful, what they are doing is simply a low-pass filtering. Therefore, to increase the expressive power of linear Transformers, we propose a more generalized GSP-based self-attention, called \underline{\textbf{A}}ttentive \underline{\textbf{G}}raph \underline{\textbf{F}}ilter (AGF). We interpret the value vector of Transformers as a signal and redesign the self-attention as a graph filter. However, the existing self-attention mechanism possesses two problems: i) since the self-attention is based on directed graphs, the graph Fourier transform through eigendecomposition is not always guaranteed, and ii) the attention map change for every batch, making it too costly to perform the graph Fourier transform every batch. In order to address the first problem, therefore, we design a self-attention layer based on the GSP process in the \emph{singular value domain} (see Fig.~\ref{fig:sa_vs_gsp} (c)). The singular value decomposition (SVD) has been used recently for the GSP in directed graphs and can substitute the eigendecomposition~\cite{maskey2023fractional}. In order to address the second problem, we directly learn the singular values and vectors instead of explicitly decomposing the self-attention map or any matrix. Since our proposed self-attention layer directly learns in the \emph{singular value domain} by generating singular vectors and values using a neural network, our proposed method has a linear complexity of $O(nd^2)$. Therefore, our method efficiently handles inputs with long sequences.

Our contributions can be summarized as follows:
\begin{enumerate}
    \item We propose an advanced self-attention mechanism based on the perspective of signal processing on directed graphs, called \underline{\textbf{A}}ttentive \underline{\textbf{G}}raph \underline{\textbf{F}}ilter (AGF), motivated that the self-attention is a simple graph filter and acts as a low pass filter in the singular value domain.
    \item AGF learns a sophisticated graph filter directly in the singular value domain with linear complexity w.r.t. input length, which incorporates both low and high-frequency information from hidden representations.
    \item The experimental results for time series, long sequence modeling and image domains demonstrate that AGF outperforms existing linear Transformers.
    \item As a side contribution, we conduct additional experiments to show that AGF effectively mitigates the over-smoothing problem in deep Transformer models, where the hidden representations of tokens to become indistinguishable from one another.
\end{enumerate}

\section{Background}

\subsection{Self-attention in Transformer}

A key operation of Transformers is the self-attention which allows them to learn the relationship among tokens. The self-attention mechanism, denoted as SA: $\mathbb{R}^{n \times d} \rightarrow \mathbb{R}^{n \times d}$, can be expressed as follows:
\begin{align}\label{eq:SA}
   \textrm{SA}(\mathbf{X}) = \textrm{softmax} \Big( \frac{\mathbf{X}\mathbf{W}_{\textrm{qry}}(\mathbf{X}\mathbf{W}_{\textrm{key}})^\intercal}{\sqrt{d}} \Big) \mathbf{X}\mathbf{W}_{\textrm{val}} = \bar{\mathbf{A}}\mathbf{X}\mathbf{W}_{\textrm{val}},
\end{align} where $\mathbf{X}\in\mathbb{R}^{n \times d}$ is the input feature and $\widebar{\mathbf{A}}\in\mathbb{R}^{n \times n}$ is the self-attention matrix. $\mathbf{W}_{\textrm{qry}}\in\mathbb{R}^{d \times d}$, $\mathbf{W}_{\textrm{key}}\in\mathbb{R}^{d \times d}$, and $\mathbf{W}_{\textrm{val}}\in\mathbb{R}^{d \times d}$ are the query, key, and value trainable parameters, respectively, and $d$ is the dimension of token. The self-attention effectively learns the interactions of all token pairs and have shown reliable performance in various tasks.

However, in the case of the existing self-attention, a dot-product is used to calculate the attention score for all token pairs. To construct the self-attention matrix $\widebar{\mathbf{A}} \in \mathbb{R}^{n \times n}$, the matrix multiplication with query and key parameters mainly causes a quadratic complexity of $\mathcal{O}(n^2d)$. Therefore, it is not suitable if the length of the input sequence is large. This is one of the major computational bottlenecks in Transformers. For instance, BERT~\cite{devlin2019bert}, one of the state-of-the-art Large Language Model (LLM), needs 16 TPUs for pre-training and 64 TPUs with large models.

\subsection{Linear Transformer}\label{sec:related_work}
To overcome the quadratic computational complexity of the self-attention, efficient Transformer variants have been proposed in recent years. Recent research focuses on reducing the complexity of the self-attention in two streams. The first research line is to replace the softmax operation in the self-attention with other operations. 
For simplicity, we denote $\textrm{softmax}(\mathbf{X}\mathbf{W}_{qry}(\mathbf{X}\mathbf{W}_{key})^\intercal)$ as $\textrm{softmax}(\mathbf{Q}\mathbf{K}^\intercal)$.
\cite{wang2020linformer} introduce projection layers that map value and key vectors to low dimensions. \cite{katharopoulos2020lineartransformer} interprets softmax as kernel function and replace the similarity function with $\textrm{elu}(\mathbf{x})+1$. \cite{choromanski2020performer} approximates the self-attention matrix with random features. \cite{shen2021efficient} decomposes the $\textrm{softmax}(\mathbf{QK}^\intercal)$ into $\textrm{softmax}(\mathbf{Q})\textrm{softmax}(\mathbf{K}^\intercal)$, which allows to calculate $\textrm{softmax}(\mathbf{K}^\intercal)\mathbf{V}$ first, reducing the complexity from $\mathcal{O}(n^2d)$ to $\mathcal{O}(nd^2)$. \cite{qin2022cosformer} replaces softmax with a linear operator and adopts a cosine-based distance re-weighting mechanism. \cite{xiong2021nystromformer} adopts Nystr{\"o}m method by down sampling the queries and keys in the attention matrix. \cite{chen2023primal} employs an asymmetric kernel SVD motivated by low-rank property of the self-attention. However, these approaches sacrifice the performance to directly reduce quadratic complexity to linear complexity.

The second research line is to introduce sparsity in the self-attention. \cite{zaheer2020bigbird} introduces a sparse attention mechanism optimized for long document processing, combining local, random, and global attention to reduce computational complexity while maintaining performance. \cite{kitaev2020reformer} use locality-sensitive hashing and reversible feed forward network for sparse approximation, while requiring to re-implement the gradient back propagation. \cite{beltagy2020longformer} employ the self-attention on both a local context and a global context to introduce sparsity. \cite{zeng2021yosoe} take a Bernoulli sampling attention mechanism based on locality sensitive hashing. However, since they do not directly reduce the complexity to linear, they also suffer a large performance degradation, while having only limited additional speed-up.

\subsection{Graph Convolutional Filter}

The graph signal processing (GSP) can be considered as a generalized concept of the discrete signal processing (DSP). In the definition of DSP, the discrete signal with length $n$ is represented by the vector $\mathbf{x} \in \mathbb{R}^n$. Then for the signal filter $\mathbf{g} \in \mathbb{R}^n$ that transforms $\mathbf{x}$, the convolution operation $\mathbf{x} * \mathbf{g}$ is defined as follows:
\begin{align}
    y_i = \sum_{j=1}^n \mathbf{x}_j\mathbf{g}_{i-j},
\end{align} where the index $i$ indicates the $i$-th element of each vector. 
GSP extends DSP to signal samples indexed by nodes of arbitrary graphs. Then we define the shift-invariant graph convolution filters $\mathbf{H}$ with a polynomial of graph shift operator $\mathbf{S}$ as follows: 
\begin{align}\label{eq:GSP}
    \mathbf{y} = \mathbf{H}\mathbf{x} = \sum_{k=0}^{K} w_k\mathbf{S}^k\mathbf{x},
\end{align} 

where $K$ is the maximum order of polynomial and $w_k\in [-\infty, \infty]$ is a coefficient. The graph filter is parameterized as the truncated expansion with the order of $K$. The most commonly used graph shift operators in GSP are adjacency and Laplacian matrices. Note that Eq.~\eqref{eq:GSP} applies to any directed or undirected adjacency matrix~\cite{ortega2018gsp,marques2020dgsp}. However, Eq.~\eqref{eq:GSP} requires non-trivial matrix power computation. Therefore, we rely on SVD to use the more efficient way in Eq.~\eqref{eq:graph_filter_monomial}.

\section{Proposed Method}\label{sec:method}

\begin{figure*}[t]
    \centering
    \includegraphics[width=0.82\textwidth]{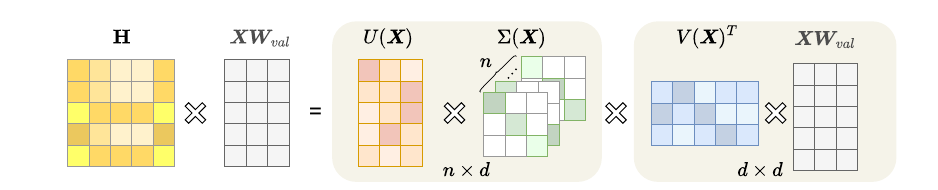}
    \caption{The proposed AGF performs the directed GSP in the singular value domain by learning $U(\mathbf{X})$, $\Sigma(\mathbf{\mathbf{X}})$, and $V(\mathbf{X})$ (cf. Eqs.~\eqref{eq:singular} to~\eqref{eq:singular2}). The $n$ different sets of singular values in $\Sigma(\mathbf{X})$ are used for token-specific processing. In other words, $n$ different graph filters are used for $n$ different tokens in order to increase the representation learning capability of AGF.}
    \label{fig:overall_architecture}
\end{figure*}

\subsection{Self-attention as a graph filter}
    
The self-attention learns the relationship among all token pairs. From a graph perspective, each token can be interpreted as a graph node and each self-attention score as an edge weight. Therefore, self-attention produces a special case of the normalized adjacency matrix~\cite{shi2022revisiting,wang2022anti} and can be analyzed from the perspective of graph signal processing (GSP). In GSP, the low-/high-frequency components of a signal $\mathbf{x}$ are defined using the Discrete Fourier Transform (DFT) $\mathcal{F}$ and its inverse $\mathcal{F}^{-1}$. Let $\bar{\mathbf{x}} = \mathcal{F}\mathbf{x}$ denote the spectrum of $\mathbf{x}$. Then, $\bar{\mathbf{x}}_{\text{lfc}} \in \mathbb{C}^c$ contains the $c$ lowest-frequency components of $\bar{\mathbf{x}}$, and $\bar{\mathbf{x}}_{\text{hfc}} \in \mathbb{C}^{n-c}$ contains the remaining higher-frequency components. The low-frequency components (LFC) of $\mathbf{x}$ are given as $\text{LFC}[\mathbf{x}] = \mathcal{F}^{-1}(\bar{\mathbf{x}}_{\text{lfc}}) \in \mathbb{R}^n$, and the high-frequency components (HFC) are defined as $\text{HFC}[\mathbf{x}] = \mathcal{F}^{-1}(\bar{\mathbf{x}}_{\text{hfc}}) \in \mathbb{R}^n$. Here, the DFT $\mathcal{F}$ projects $\mathbf{x}$ into the frequency domain, and $\mathcal{F}^{-1}$ reconstructs $\mathbf{x}$ from its spectrum. The Fourier basis $\bm{f}_j = [e^{2\pi i(j-1)\cdot 0}, e^{2\pi i(j-1)\cdot 1}, \dots, e^{2\pi i(j-1)(n-1)}]^{\intercal} / \sqrt{n}$ is used in computing $\mathcal{F}$, where $j$ denotes the $j$-th row. In GSP, the adjacency matrix functions as a low-pass filter, using the edge weights to aggregate information from nodes attenuates the high-frequency information of the nodes. In other words, the self-attention also acts as a low-pass filter within Transformers, and it is theoretically demonstrated below.

\begin{theorem}[Self-attention is a low-pass filter]\label{thm:sa_is_lpf}

Let $\mathbf{M}=\textrm{softmax}(\mathbf{Z})$ for any matrix $\mathbf{Z}\in\mathbb{R}^{n\times n}$. Then $\mathbf{M}$ inherently acts as a low pass filter. For all $\mathbf{x}\in \mathbb{R}^N$, in other words, $\text{lim}_{t\rightarrow \infty}\Vert \text{HFC}[\mathbf{M}^t(\mathbf{x})]\Vert_2/\Vert \text{LFC}[\mathbf{M}^t(\mathbf{x})]\Vert_2=0$
\end{theorem}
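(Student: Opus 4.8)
The plan is to use that $\mathbf{M}=\mathrm{softmax}(\mathbf{Z})$ is an entrywise strictly positive, row-stochastic matrix, so the iterates $\mathbf{M}^{t}\mathbf{x}$ collapse onto the constant direction $\mathbf{1}$; since $\mathbf{1}=\sqrt{n}\,\bm{f}_{1}$ is exactly the zero-frequency (DC) Fourier mode, the limiting signal carries no high-frequency content but a nonzero amount of low-frequency content, which forces the ratio to vanish. This is the specialization to the DFT basis of the over-smoothing arguments of \cite{shi2022revisiting,wang2022anti}.

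Concretely I would proceed in three steps. \emph{Step 1 (spectral structure).} Since $M_{ij}=e^{Z_{ij}}/\sum_{k}e^{Z_{ik}}>0$ for all $i,j$ and $\mathbf{M}\mathbf{1}=\mathbf{1}$, the Perron--Frobenius theorem for positive matrices gives that $1$ is a simple eigenvalue equal to the spectral radius, with right eigenvector $\mathbf{1}$ and a strictly positive left eigenvector $\bm{\pi}$ (normalized by $\bm{\pi}^{\intercal}\mathbf{1}=1$), while every other eigenvalue satisfies $|\lambda|<1$. Hence $\mathbf{M}^{t}\to\mathbf{1}\bm{\pi}^{\intercal}$ with $\Vert\mathbf{M}^{t}-\mathbf{1}\bm{\pi}^{\intercal}\Vert=O(t^{\,r-1}|\lambda_{2}|^{t})$, where $|\lambda_{2}|<1$ is the second-largest eigenvalue modulus and $r$ the size of its largest Jordan block; in particular $\mathbf{M}^{t}\mathbf{x}\to c\,\mathbf{1}$ with $c:=\bm{\pi}^{\intercal}\mathbf{x}$. \emph{Step 2 (frequency bookkeeping).} The operators $\mathrm{LFC}[\cdot]$ and $\mathrm{HFC}[\cdot]$ are compositions of $\mathcal{F}$, a coordinate truncation, and $\mathcal{F}^{-1}$, hence linear and continuous on $\mathbb{R}^{n}$; since $\bm{f}_{1}$ is the zero-frequency mode (kept by $\mathrm{LFC}[\cdot]$, discarded by $\mathrm{HFC}[\cdot]$) and $\mathbf{1}=\sqrt{n}\,\bm{f}_{1}$, we get $\mathrm{HFC}[\mathbf{1}]=\mathbf{0}$, $\mathrm{LFC}[\mathbf{1}]=\mathbf{1}$, and $\Vert\mathbf{1}\Vert_{2}=\sqrt{n}\neq0$. \emph{Step 3 (pass to the limit).} For any $\mathbf{x}$ with $c\neq0$, continuity yields $\mathrm{HFC}[\mathbf{M}^{t}\mathbf{x}]\to\mathbf{0}$ and $\Vert\mathrm{LFC}[\mathbf{M}^{t}\mathbf{x}]\Vert_{2}\to|c|\sqrt{n}>0$, so the quotient tends to $0$; equivalently it is bounded above by $\Vert\mathbf{M}^{t}\mathbf{x}-c\mathbf{1}\Vert_{2}/(|c|\sqrt{n}-\Vert\mathbf{M}^{t}\mathbf{x}-c\mathbf{1}\Vert_{2})=O(t^{\,r-1}|\lambda_{2}|^{t})$, which also gives the convergence rate.

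The main obstacle is the degenerate direction $c=\bm{\pi}^{\intercal}\mathbf{x}=0$: then $\mathbf{M}^{t}\mathbf{x}\to\mathbf{0}$ and the ratio is an indeterminate $0/0$. To handle it I would either restrict the statement to inputs with $\bm{\pi}^{\intercal}\mathbf{x}\neq0$ (this excludes only a measure-zero hyperplane, the usual convention in this line of work), or, to keep the ``for all $\mathbf{x}$'' phrasing, renormalize $\mathbf{M}^{t}\mathbf{x}$ by its dominant surviving factor $|\lambda_{2}|^{t}$ and analyze the frequency content of the relevant (generalized) eigenvectors of $\mathbf{M}$ restricted to $\{\mathbf{v}:\bm{\pi}^{\intercal}\mathbf{v}=0\}$ --- the latter needs extra care when $\lambda_{2}$ is complex or lies in a nontrivial Jordan block. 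A secondary but routine point is justifying the geometric rate in Step~1 via the Jordan decomposition of $\mathbf{M}$ on that invariant complement of $\mathrm{span}(\mathbf{1})$.
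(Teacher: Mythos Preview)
Your proposal is correct and follows essentially the same approach as the paper: both invoke Perron--Frobenius for the positive row-stochastic matrix $\mathbf{M}$, pass through the Jordan form to conclude that $\mathbf{M}^{t}\mathbf{x}$ collapses onto the dominant eigendirection $\mathbf{1}$, and identify this with the zero-frequency DFT mode. Your version is in fact more careful than the paper's, which does not explicitly link $\mathbf{1}$ to $\bm{f}_{1}$ and does not discuss the degenerate case $\bm{\pi}^{\intercal}\mathbf{x}=0$ at all.
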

The proof of Theorem~\ref{thm:sa_is_lpf} is in Appendix~\ref{app:proof_sa_is_lpf}. As the self-attention is normalized by softmax, the self-attention functions as a low-pass filter. Hence, Transformers are unable to sufficiently leverage a various scale of frequency information, which reduces the expressive power of Transformers.

Inspired by this observation, we redesign a graph filter-based self-attention from the perspective of GSP. As mentioned earlier, since the adjacency matrix can serve as a graph-shift operator, it is reasonable to interpret the self-attention as a graph-shift operator, $\mathbf{S}=\mathbf{\bar{A}}$. Moreover, the self-attention block of the Transformer is equivalent to defining a simple graph filter  $\mathbf{H}=\mathbf{\bar{A}}$ and applying GSP to the value vector, treated as a signal. Therefore, in Eq.~\eqref{eq:GSP}, we can design a more complex graph filter through the polynomial expansion of the self-attention.

Note that when we interpret the self-attention $\mathbf{\bar{A}}$ as a graph, it has the following characteristics: i) The self-attention is an asymmetric directed graph, and ii) all nodes in the graph are connected to each other since the self-attention calculates the relationships among the tokens. Then we can derive that the self-attention is a special case of the symmetrically normalized adjacency (SNA) as $\mathbf{\bar{A}}=\mathbf{D}^{-1}\mathbf{A}$ where $\mathbf{A}$ is an adjacency matrix and $\mathbf{D}$ is a degree matrix of nodes. In particular, SNA is one of the most popular forms for directed GSP~\cite{maskey2023fractional}.

\subsection{Polynomial graph filter}
When approximating the graph filter with a matrix polynomial, $k-1$ matrix multiplications are required to calculate up to the $k$-th order (cf. Eq.~\eqref{eq:GSP}), which requires a large computational cost. Therefore, to reduce the computational complexity, we avoid matrix multiplications by directly learning the graph filter in the spectral domain. In the case of an undirected graph, a filter can be learned in the spectral domain by performing the graph Fourier transform through eigendecomposition. In general, however, the eigendecomposition is not guaranteed for directed graphs. The GSP through SVD, therefore, is often used~\cite{maskey2023fractional} if i) a directed graph $\widebar{\mathbf{A}}$ is SNA and ii) its singular values are non-negative and within the unit circle, i.e., $\Vert \widebar{\mathbf{A}} \Vert \leq 1$. For $\widebar{\mathbf{A}}$ and its SVD $\widebar{\mathbf{A}}=\mathbf{U}\mathbf{\Sigma} \mathbf{V}^\intercal$, an $\alpha$-power of the symmetrically normalized adjacency is defined as:
\begin{align}
    \widebar{\mathbf{A}}^\alpha:=\mathbf{U}\mathbf{\Sigma}^\alpha \mathbf{V}^\intercal,
\end{align}where $\alpha \in \mathbb{R}$~\cite{maskey2023fractional}. Therefore, we can define the graph filter $\mathbf{H}$ as follows:
\begin{align}\label{eq:graph_filter_monomial}
\mathbf{y}=\mathbf{H}\mathbf{x} = g_\theta(\mathbf{\bar{A}})\mathbf{x} = \mathbf{U}g_\theta(\mathbf{\Sigma})\mathbf{V}^\intercal\mathbf{x} = \mathbf{U}(\sum_{k=0}^{K}\theta_k\mathbf{\Sigma}^k)\mathbf{V}^{\intercal}\mathbf{x},
\end{align}
where $\theta\in\mathbb{R}^n$ is a vector for singular value coefficients. Therefore, a spectral filter can be defined as a truncated expansion with $K$-th order polynomials. In other words, unlike directly performing the matrix polynomial as in Eq.~\eqref{eq:GSP}, the computational cost is significantly reduced by $K$ times element-wise multiplying of the singular values, which are represented as a diagonal matrix.

However, the polynomial expansion in Eq.~\eqref{eq:graph_filter_monomial} is parameterized with monomial basis, which is unstable in terms of its convergence since the set of bases is non-orthogonal. Therefore, for stable convergence, a filter can be designed using an orthogonal basis. Note that we have the flexibility to apply any basis when using the polynomial expansion for learning graph filters. In this work, we adopt the Jacobi expansion~\cite{askey1985jacobi}, one of the most commonly used polynomial bases. Furthermore, Jacobi basis is a generalized form of classical polynomial bases such as Chebyshev~\cite{defferrard2016chebnet} and Legendre~\cite{mccarthy1993legendre}, offering strong expressiveness in the graph filter design. Detailed formulas are provided in Appendix~\ref{app:jacobi}. Therefore, we can define the graph polynomial filter as follows:
\begin{align}\label{eq:graph_filter_general}
g_\theta(\mathbf{\Sigma})=\sum_{k=0}^{K}\theta_kT_k(\mathbf{\Sigma}),
\end{align} where $T_k(\cdot)$ is a specific polynomial basis of order $k$.

\subsection{Attentive Graph Filter}
In order to use Eq.~\eqref{eq:graph_filter_general}, however, we need to decompose the adjacency matrix $\widebar{\mathbf{A}}$, which incurs non-trivial computation. Therefore, we propose to directly learn a graph filter in the singular value domain (instead of learning an adjacency matrix, i.e., a self-attention matrix, and decomposing it). Therefore, as shown in Fig.~\ref{fig:overall_architecture}, we propose our attentive graph filter (AGF) as follows:
\begin{align}
    \mathrm{AGF}(\mathbf{X}) &= \mathbf{HXW}_{\textrm{val}} =U(\mathbf{X})\Sigma(\mathbf{\mathbf{X}})V(\mathbf{X})^\intercal\mathbf{X}\mathbf{W}_{\textrm{val}},
\end{align}
\begin{align}\label{eq:singular}
    U(\mathbf{X}) &= \rho(\mathbf{X}\mathbf{W}_U) & \in \mathbb{R}^{n\times d}, \\
    \Sigma(\mathbf{X}) &=\sum_{k=0}^{K} \theta_k T_k(\mathrm{diag}(\sigma(\mathbf{X}\mathbf{W}_\Sigma))) & \in \mathbb{R}^{n\times d \times d}, \label{eq:sigma}\\    V(\mathbf{X})^\intercal &= \rho((\mathbf{X}\mathbf{W}_V)^\intercal) & \in \mathbb{R}^{d\times n}, \label{eq:singular2}
\end{align} where $\mathbf{W}_U, \mathbf{W}_\Sigma, \mathbf{W}_V\in\mathbb{R}^{d \times d}$ are learnable matrices, $\rho$ is a softmax, and $\sigma$ is a sigmoid. Our proposed model does not apply SVD directly on the computed self-attention or other matrices. Instead, the learnable singular values $\sigma(\mathbf{X}\mathbf{W}_\Sigma)$ and orthogonally regularized singular vectors $U(\mathbf{X})$ and $V(\mathbf{X})$ are generated by neural network. The singular values are then filtered by the graph filter, denoted as $\Sigma(\mathbf{X})$. To ensure that the elements of the singular value matrix are non-negative and within the unit circle, the sigmoid function is applied to the matrix. Moreover, we observe that the softmax of singular vectors enhances the stability of learning.

We construct our graph filter using the generated singular values, leveraging the Jacobi expansion as an orthogonal polynomial basis. If the trainable coefficients $\theta_k$ is allowed to take negative values and learned adaptively, the graph filter can pass low/high-frequency components of the value vector. Therefore, AGF functions as a graph filter that leverages various frequency information from the value vector. Furthermore, unlike the adjacency matrix that remains unchanged in GCNs, the self-attention matrix changes with each batch. To enhance the capacity for addressing these dynamics, AGF incorporates a token-specific graph filter, characterized by $n$ different sets of singular values. This allows to leverage the token-specific frequency information in the singular value domain, increasing the capability to handle complex dynamics in hidden representation.

\subsection{Objective Function}

In the definition of SVD, $U(\mathbf{X})$ is column orthogonal, $V(\mathbf{X})$ is row orthogonal, and $\Sigma(\mathbf{X})$ is a rectangular diagonal matrix with non-negative real numbers. When we train the proposed model, strictly constraining $U(\mathbf{X})$ and $V(\mathbf{X})$ to be orthogonal requires a high computational cost. Instead, we add a regularization on them since these matrices generated by neural network can be trained to be orthogonal as follows:
\begin{align}\begin{split}
\mathcal{L}_{ortho} = &\frac{1}{n^2} \big(\Vert (U(\mathbf{X})^\intercal U(\mathbf{X})-\mathbf{I} \Vert + \Vert (V(\mathbf{X})V(\mathbf{X})^\intercal-\mathbf{I} \Vert \big),
\end{split}\end{align} where $\mathbf{I}\in\mathbb{R}^{d \times d}$ is an identity matrix. Therefore, our joint learning objective $\mathcal{L}$ is as follows:
\begin{align}
\mathcal{L} =  \mathcal{L}_{transformer} + \gamma \mathcal{L}_{ortho},
\end{align} where $ \mathcal{L}_{transformer}$ is an original objective function for Transformers. The hyperparameters $\gamma$ controls the trade-off between the loss and the regularization.

\subsection{Time and Space Complexities of AGF}
Since our AGF is based on the concept of SVD, it is not restricted by softmax for calculating attention scores. Therefore, $U(\mathbf{X})$, $\Sigma(\mathbf{X})$, and $V(\mathbf{X})$ generated by neural network can be freely multiplied according to the combination law of matrix multiplication. First, since $\Sigma(\mathbf{X})$ is a diagonal matrix, by performing element-wise multiplication with $U(\mathbf{X})$ and the diagonal elements of $\Sigma(\mathbf{X})$, $(n \times d)$ matrix is calculated with a time complexity of $\mathcal{O}(nd)$. Next, by multiplying $V(\mathbf{X})$ and the value vector, $(d \times d)$ matrix is calculated with a time complexity of $\mathcal{O}(nd^2)$. Finally, by multiplying the outputs of steps 1 and 2, the final output is $(n \times d)$ matrix with a time complexity of $\mathcal{O}(nd^2)$. Therefore, the time complexity is $\mathcal{O}(nd^2)$ and the space complexity is $\mathcal{O}(nd+ d^2)$.

\subsection{Properties of AGF}

\paragraph{How to use high-frequency information.}
In GSP, the characteristics of the graph filter are determined by the learned coefficients $\theta_k$ of the signal. These coefficients allow the graph filter to function as a low-pass, high-pass, or combined-pass filter, depending on the specific needs of each task~\cite{defferrard2016chebnet,marques2020dgsp,chien2021GPRGNN}, demonstrated by following theorem:
\begin{theorem}[Adapted from \cite{chien2021GPRGNN}]\label{thm:coeff}
Assume that the graph $G$ is connected. If $\theta_k \geq 0$ for $\forall k \in \{0, 1, ..., K\}$, $\sum_{k=0}^K \theta_k =1$ and $\exists k'> 0$ such that $\theta_{k'}> 0$, then $g_\theta(\cdot)$ is a low-pass graph filter. Also, if $\theta_k=(-\alpha)^k$ , $\alpha \in (0, 1)$ and $K$ is large enough, then $g_\theta(\cdot)$ is a high-pass graph filter.
\end{theorem}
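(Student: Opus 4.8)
The plan is to reduce the theorem to an elementary analysis of the scalar \emph{frequency response} of the filter, adapting the argument of~\cite{chien2021GPRGNN} to the present singular-value setting. Write the graph-shift operator in its spectral form: for the symmetric case $\bar{\mathbf{A}}=\sum_i \lambda_i \mathbf{u}_i\mathbf{u}_i^\intercal$ with $\lambda_i\in[-1,1]$, and in the directed setting through the SVD $\bar{\mathbf{A}}=\mathbf{U}\mathbf{\Sigma}\mathbf{V}^\intercal$ with singular values in $[0,1]$ (this is where the hypothesis $\Vert\bar{\mathbf{A}}\Vert\le 1$ is used). In either case the filter acts on a spectral mode with value $s$ by multiplication with $g_\theta(s)=\sum_{k=0}^K\theta_k T_k(s)$, which for the Generalized-PageRank coefficients appearing in the statement reads $g_\theta(s)=\sum_{k=0}^K\theta_k s^k$. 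Connectedness of $G$ enters via the Perron--Frobenius theorem: it makes the largest spectral value equal to $1$ and simple, so the mode at $s=1$ is the unique lowest-frequency (``DC'') component. With this, ``$g_\theta$ is low-pass'' is made precise as: $|g_\theta(s)|$ attains its maximum, strictly, at $s=1$, while ``high-pass'' means $|g_\theta(s)|$ increases as the frequency increases (i.e.\ as $s$ decreases).

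For the low-pass claim I would first evaluate at the DC mode: $g_\theta(1)=\sum_{k=0}^K\theta_k=1$. For any other mode, $|s|\le 1$ and $\theta_k\ge 0$, so $\theta_k|s|^k\le\theta_k$ for every $k$, with the inequality strict for $k=k'$ as soon as $|s|<1$, because $\theta_{k'}>0$ and $k'>0$. Summing and applying the triangle inequality, $|g_\theta(s)|\le\sum_{k=0}^K\theta_k|s|^k<\sum_{k=0}^K\theta_k=1=g_\theta(1)$ for every $s\ne 1$ in the spectrum. Hence the response is strictly dominated at the lowest frequency, which is the low-pass property; the hypothesis that some $\theta_{k'}>0$ with $k'>0$ is exactly what excludes the degenerate filter $g_\theta\equiv\theta_0$, which would be a pure scaling rather than a genuine low-pass filter.

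For the high-pass claim, set $\theta_k=(-\alpha)^k$ and sum the geometric series:
\begin{align}
g_\theta(s)=\sum_{k=0}^K(-\alpha s)^k=\frac{1-(-\alpha s)^{K+1}}{1+\alpha s}.
\end{align}
Because $\alpha\in(0,1)$ and $|s|\le 1$ we have $1+\alpha s\ge 1-\alpha>0$ and $|(-\alpha s)^{K+1}|\le\alpha^{K+1}\to 0$, so as $K$ grows $g_\theta(s)$ converges, uniformly over the spectrum, to $1/(1+\alpha s)$, which is strictly positive and strictly decreasing in $s$ (its derivative is $-\alpha/(1+\alpha s)^2<0$); since frequency increases as $s$ decreases, the response increases with frequency. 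Choosing $K$ large enough that $\alpha^{K+1}$ is smaller than the minimum gap between consecutive values of $1/(1+\alpha s)$ over the finite spectrum transfers this strict monotonicity to the actual filter $g_\theta$, yielding the high-pass property.

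The scalar estimates above are routine; the step I expect to need the most care is the spectral bookkeeping that carries the originally undirected argument of~\cite{chien2021GPRGNN} into this directed / singular-value framework: confirming that for the symmetrically normalized adjacency $\bar{\mathbf{A}}=\mathbf{D}^{-1}\mathbf{A}$ the relevant top spectral (singular) value is $1$ and simple under connectedness, and fixing the correct frequency ordering in the singular-value domain (where $\sigma=1$ is DC and $\sigma=0$ is the highest frequency). A secondary obstacle, if one works with the Jacobi basis $T_k$ of Eq.~\eqref{eq:graph_filter_general} instead of monomials, is to verify the analogous sign and monotonicity bookkeeping for $\sum_k\theta_k T_k(\cdot)$; and one must also pin down the precise definition of ``low-/high-pass'' being invoked (peak location of $|g_\theta|$ versus full monotonicity of the response), since the two conventions lead to slightly different verifications.
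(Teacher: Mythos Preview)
Your proposal is correct and follows essentially the same route as the paper's own proof: both compute $g_\theta(1)=\sum_k\theta_k=1$ and use the triangle inequality together with $|\lambda_i|<1$ for $i\ge 2$ to obtain $|g_\theta(\lambda_i)|<1$ in the low-pass case, and both sum the geometric series $\sum_k(-\alpha\lambda)^k\to 1/(1+\alpha\lambda)$ and compare its values at $\lambda_1=1$ versus $\lambda_i<1$ in the high-pass case. Your write-up is slightly more careful than the paper's (you track the finite-$K$ remainder $(-\alpha s)^{K+1}$ explicitly and flag the spectral bookkeeping needed to port the argument to the singular-value setting), but the underlying ideas are identical.
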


The proof is in Appendix~\ref{app:proof_coeff}. Theorem~\ref{thm:coeff} indicates that if the coefficient $\theta_k$ of a graph filter can have negative values, and learned adaptively, the graph filter will pass low and high frequency signals appropriately. This flexibility is crucial for effectively processing signals with varying frequency components. Similarly, AGF operates as a filter that modulates frequency information in the singular value domain through the generated singular values and singular vectors. This approach enables AGF to dynamically adjust the frequency components of the signal, providing a more tailored and efficient filtering process. Therefore, unlike conventional Transformers, AGF can appropriately incorporate both low and high frequencies for each task, thereby enhancing the expressive power and adaptability of Transformers.

\paragraph{Comparison with existing linear self-attention methods.}
We explain that while our AGF addresses the computational inefficiencies inherent in the vanilla self-attention like existing linear self-attention studies, we take a different approach from them. Instead of using explicit SVDs, our AGF reinterprets self-attention through a GSP lens, using the learnable SVD to learn graph filters directly from the spectral domain of directed graphs. Linformer~\cite{wang2020linformer}, the most prominent representative of linear self-attention, approximates the vanilla self-attention through dimensionality reduction, and Nyströmformer~\cite{xiong2021nystromformer}, which reduces to linear complexity with a kernel decomposition method, also efficiently approximates the full self-attention matrix with the Nyström method. Singularformer~\cite{wu2023singularformer}, a closely related approach, uses a parameterized SVD and linearize the calculation of self-attention. However, like existing linear Transformers, it approximates the original self-attention, which is inherently a low-pass filter. Thus, to the best of our knowledge, existing linear self-attention methods focus on approximating the self-attention and reducing it to linear complexity, whereas our AGF approximates a graph filter rather than the self-attention. This allows AGF to use the token-specific graph filter to improve model representation within the singular value domain. 

\section{Experiments}\label{sec:experiments}
\begin{table*}[t]
\centering
\resizebox{0.87\textwidth}{!}{%
\begin{tabular}{l  cccccccccc  c}\toprule
                  & \multicolumn{1}{l}{EC} & \multicolumn{1}{l}{FD} & \multicolumn{1}{l}{HW} & \multicolumn{1}{l}{HB} & \multicolumn{1}{l}{JV} & \multicolumn{1}{l}{PEMS-SF} & \multicolumn{1}{l}{SRSCP1} & \multicolumn{1}{l}{SRSCP2} & \multicolumn{1}{l}{SAD} & \multicolumn{1}{l}{UWGL} & \multicolumn{1}{c}{Avg} \\ \midrule
Transformer       & 32.7 & 67.3 & 32.0 & 76.1 & 98.7 & 82.1 & 92.2 & 53.9 & 98.4 & 85.6 & 71.9 \\
LinearTransformer & 31.9 & 67.0 & \underline{34.7} & 76.6 & \underline{99.2} & 82.1 & \underline{92.5} & 56.7 & 98.0 & 85.0 & 72.4 \\
Reformer          & 31.9 & \underline{68.6} & 27.4 & 77.1 & 97.8 & 82.7 & 90.4 & 56.7 & 97.0 & 85.6 & 71.5 \\
Longformer        & 32.3 & 62.6 & \textbf{39.6} & \underline{78.0} & 98.9 & 83.8 & 90.1 & 55.6 & 94.4 & 87.5 & 72.0 \\
Performer         & 31.2 & 67.0 & 32.1 & 75.6 & 98.1 & 80.9 & 91.5 & 56.7 & 98.4 & 85.3 & 71.9 \\
YOSO-E            & 31.2 & 67.3 & 30.9 & 76.5 & 98.6 & 85.2 & 91.1 & 53.9 & 98.9 & \underline{88.4} & 72.2 \\
Cosformer         & 32.3 & 64.8 & 28.9 & 77.1 & 98.3 & 83.2 & 91.1 & 55.0 & 98.4 & 85.6 & 71.5 \\
SOFT              & 33.5 & 67.1 & \underline{34.7} & 75.6 & \underline{99.2} & 80.9 & 91.8 & 55.6 & 98.8 & 85.0 & 72.2 \\
Flowformer        & \underline{33.8} & 67.6 & 33.8 & 77.6 & 98.9 & 83.8 & 92.5 & 56.1 & 98.8 & 86.6 & 73.0 \\ 
Primalformer      & 33.1 & 67.1 & 29.6 & 76.1 & 98.3 & \underline{89.6} & \underline{92.5} & \underline{57.2} & \textbf{100.0}         & 86.3 & \underline{73.0} \\ \midrule
AGF               & \textbf{36.1} & \textbf{69.9} & 33.5 & \textbf{79.0}& \textbf{99.5} & \textbf{91.3} & \textbf{93.5} & \textbf{58.9} & \textbf{100.0} & \textbf{89.4} & \textbf{75.1} \\ \bottomrule
\end{tabular}
}
\caption{Performance comparison on UEA time series classification. Abbreviations are as follows: EthanolConcentration (EC), FaceDetection (FD), HandWriting (HW), HearBeat (HB), JapaneseVowels (JV), PEMS-SF, SelfRegulation SCP1 (SRSCP1), SelfRegulation SCP2 (SRSCP2), SpokenArabicDigits (SAD), and UWaveGesture Library (UWGL).}
\label{tab:ts_main}
\end{table*}

\begin{table}[t]
\small
\centering
\setlength{\tabcolsep}{2.5pt}
\resizebox{0.95\columnwidth}{!}{%
\begin{tabular}{l  ccccc  c}\toprule
                    & \multicolumn{1}{l}{ListOps} & \multicolumn{1}{l}{Text} & \multicolumn{1}{l}{Retrieval} & \multicolumn{1}{l}{Image} & \multicolumn{1}{l}{Pathfinder} & \multicolumn{1}{l}{Avg} \\ \midrule
Transformer         & 37.1 & 65.0 & 79.4   & 38.2  & 74.2    & 58.8 \\
Reformer            & 19.1 & 64.9 & 78.6   & 43.3  & 69.4    & 55.1 \\
Performer           & 18.8 & 63.8 & 78.6   & 37.1  & 69.9    & 53.6 \\
Singularformer      & 18.7 &  61.8 & 76.7 & 35.3 &  55.8 & 49.7 \\
Linformer           & 37.3 & 55.9 & 79.4   & 37.8  & 67.6    & 55.6 \\
Nystr{\"o}mformer   & 37.2 & 65.5 & 79.6   & 41.6  & 70.9    & 59.0 \\
Longformer          & 37.2 & 64.6 & 81.0   & 39.1  & 73.0    & 59.0 \\
YOSO-E              & 37.3 & 64.7 & 81.2   & 39.8  & 72.9    & 59.2 \\
Primalformer &  37.3  & 61.2 & 77.8   & 43.0 & 68.3       & 57.5               \\  \midrule
AGF             & 38.0 & 64.7 & 81.4      & 42.4         &  74.0 &    \textbf{60.1} \\ \bottomrule

\end{tabular}
}
\caption{Performance comparison on LRA benchmark}
\label{table:lra_main}
\end{table}

\subsection{Time Series Classification}
\paragraph{Experimental settings.}\label{sec:exp_uea_setting}
To evaluate the performance of AGF, we employ UEA Time Series Classification Archive~\cite{bagnall2018uea} which is the benchmark on temporal sequences. Strictly following~\cite{wu2022flowformer}, we report accuracy for 10 multivariate datasets preprocessed according to~\cite{zerveas2021transformerts}. We adopt 2-layer Transformer as backbone with 512 hidden dimension on 8 heads and 64 embedding dimension of self-attention. The experiments are conducted on 1 GPU of NVIDIA RTX 3090 24GB. The detailed descriptions are in Appendix~\ref{app:exp_setting_uea}.

\paragraph{Experimental results.}
Table~\ref{tab:ts_main} summarizes the test accuracy of AGF and the state-of-the-art linear Transformer models on the UEA time series classification task. We observe that AGF achieves an average accuracy of 75.1, outperforming the vanilla Transformer and other linear Transformers by large margins across various datasets. This performance gap underscores the effectiveness of our approach in leveraging advanced graph filter-based self-attention to enhance the expressive power of Transformers.


\subsection{Long Range Arena Benchmark}
\paragraph{Experimental settings.}\label{sec:exp_lra_setting}
We evaluate AGF on Long Range Arena (LRA)~\cite{tay2020lra} benchmark under long-sequence scenarios. Following~\cite{xiong2021nystromformer}, we train 2 layer Transformer with 128 hidden dimension, 2 heads, and 64 embedding dimension with mean pooling. The experiments are conducted on 1 GPU of NVIDIA RTX 3090 24GB. The details are in Appendix~\ref{app:exp_setting_lra}.

\paragraph{Experimental results.}
We report the top-1 test accuracy on LRA benchmark in Table~\ref{table:lra_main}. Our modeldemonstrated the highest average performance, achieving a score of 60.1 --- an improvement of 1.3 points over the vanilla Transformer. In contrast, SingularFormer, a close approach that parameterizes SVD, only functions as a low-pass filter and thus fails to achieve optimal performance. Compared with YOSO-E, a state-of-the-art linear-complexity Transformer, AGF improves the performance by a substantial margin.

\subsection{Sensitivity Analyses}
We conduct sensitivity studies on $K$ and $\gamma$. For other sensitivity studies on are reported in Appendix~\ref{app:full_sens}.
\paragraph{Sensitivity study on $K$.}
We test our model by varying $K$ on UEA time series classification, and the results are shown in Table~\ref{tab:uea_k_sens}. As $K$ increases, the performance improves. However, beyond a certain threshold, increasing $K$ results in saturation and diminished performance. Therefore, choosing an appropriate $K$ has a significant impact on performance.

\paragraph{Sensitivity study on $\gamma$.}
Table~\ref{tab:lra_gamma_sens} summarizes the impact of $\gamma$ on LRA benchmark. The optimal level of regularization applied to learnable singular vectors varies depending on the dataset, and we demonstrate that imposing a certain degree of regularization can enhance training stability.

\begin{table}[t]
    \small
    \centering
    \setlength{\tabcolsep}{3.5pt}
    \begin{tabular}{l ccccccc}
    \toprule
    $K$ & EC & FD & JV & PEMS-SF & SRSCP1 & UWGL \\ \midrule
    3  & 32.3 & 68.2 & 98.9 & 86.7 & 91.1 & 84.1 \\
    4  & 31.6 & 68.8 & \textbf{99.5} & \textbf{89.6}  & 92.2 & 84.1 \\
    6  & \textbf{36.1} & 68.3 & 98.9 & 83.8 & 91.1 & 84.4 \\
    9  & 30.8 & \textbf{69.9} & 99.2 & 83.8 & \textbf{93.5} & 86.2 \\
    10 & 32.3 & 67.5 & 98.9 & 87.3 & 91.1 & \textbf{89.4} \\\bottomrule     
\end{tabular}
    \caption{Effect of $K$ on UEA classification} 
    \label{tab:uea_k_sens}
\end{table}

\begin{table}[t]
    \small
    \centering
    \setlength{\tabcolsep}{3pt}
    \begin{tabular}{l ccccc}
    \toprule
        $\gamma$ & ListOps & Text & Retrieval & Image & Pathfinder\\ \midrule
        $\num{1e-1}$ & \textbf{38.0} & 64.3 & \textbf{81.4} & 40.8 & 73.1 \\
        $\num{1e-2}$ & 36.9 & 64.5 & 79.8 & \textbf{42.4} & 73.3 \\ 
        $\num{1e-3}$ & 37.2 & \textbf{64.7} & 79.5 & 42.0 & \textbf{74.0} \\
        $\num{1e-4}$ & 37.0 & 64.2 & 79.4 & 41.0 & \textbf{74.0} \\
        \bottomrule     
    \end{tabular}
    \caption{Effect of $\gamma$ on LRA benchmark} 
    \label{tab:lra_gamma_sens}
\end{table}

\begin{table*}[t]
\centering
\resizebox{0.82\linewidth}{!}{%
\small
\begin{tabular}{l  ccccc  c }\toprule
                    & ListOps(2K) & Text(4K) & Retrieval(4K) & Image(1K) & Pathfinder(1K) & Avgerage \\ \midrule
Transformer  & 194.5/5.50 & 694.8/21.24 & 1333.7/18.72 & 334.5/5.88 & 405.5/5.88 & 592.6/11.44 \\
Nyströmformer     & 68.3/0.89 & 52.3/0.48 & 187.5/1.93 & 227.9/1.93 & 232.6/3.29 & 153.7/1.70  \\
Performer         & 90.3/1.67 & 55.9/0.84 & 230.7/3.34 & 296.7/3.34 & 344.8/6.28 & 203.7/3.09  \\
Reformer          & 94.1/1.64 & 58.1/0.82 & 244.2/3.29 & 309.1/3.29 & 370.7/6.09 & 215.2/3.03  \\
PrimalFormer & 56.5/0.69 & 93.6/1.37 & 185.3/2.99 & 142.9/1.39 & 180.0/1.52 & 131.7/1.59  \\ \midrule 
AGF          &60.8/0.88 &	48.4/0.51 &	252.3/3.95 &	183.3/2.15	 &209.3/1.89 &	150.8/1.90 \\ \bottomrule
\end{tabular}
}
\caption{Running time (s/1K-steps) and the peak training memory usage (GB) on LRA benchmark}
\label{tab:runtime_lra}
\end{table*}

\begin{table}[t]
\centering
\small
\setlength{\tabcolsep}{2.8pt}
\begin{tabular}{l  cccccccc}\toprule
         & \multicolumn{1}{l}{EC} & \multicolumn{1}{l}{FD} & \multicolumn{1}{l}{HW} & \multicolumn{1}{l}{HB}  & \multicolumn{1}{l}{PEMS-SF}  & \multicolumn{1}{l}{UWGL} \\ \midrule

$\mathbf{H}_{UV^\intercal}$  & 29.7 & 66.6 & 28.2 & 76.6  & 87.3  & 83.8  \\
$\mathbf{H}_{SVD}$ & 33.1 & 67.1 & 27.1 & 75.1  &   88.4   & 85.9\\ \midrule
AGF            & \textbf{36.1} & \textbf{69.9} & \textbf{33.5} & \textbf{79.0} & \textbf{91.3}  & \textbf{89.4}\\ \bottomrule
\end{tabular}
\caption{Ablation study on the graph filter}
\label{tab:abl_signal}
\end{table}

\subsection{Empirical Runtime}
Table~\ref{tab:runtime_lra} summarize the results of the runtime and peak memory usage during the training phase. AGF consistently improves the efficiency of both time and space complexity compared to the vanilla Transformer. Specifically, for Text dataset, which have extremely long input sequences, the efficiency of AGF stands out even more. When compared with other linear complexity Transformers, our AGF shows comparable efficiency with longer sequences.

\subsection{Ablation Studies}\label{subsec:abla}

We conduct various ablation studies, and the additional results on $T(\cdot)$ and $\rho$ are in Appendix~\ref{app:ablation}.

\paragraph{Effect on the graph filter.} 
To analyze the impact of graph filter, we conduct an ablation study on the following variants: i) $\mathbf{H}_{UV^\intercal}$ refers to the graph filters with parameterized singular vectors and the singular values are fixed as one; ii) $\mathbf{H}_{SVD}$ initializes the singular values to one, allowing them to be learnable from $\mathbf{H}_{UV^\intercal}$; and iii) AGF refers to the proposed method. Table~\ref{tab:abl_signal} shows the result of the effect of the graph filter, and in general, these ablation models leads to suboptimal performance. However, AGF processes the generated signal through the graph filter, allowing the model to use various scales of frequency information. The graph filter enhances the capacity of the model, resulting in optimal performance and demonstrating the effectiveness of AGF.

\subsection{Additional Experiments on Deep Transformer}\label{subsec:vit}

\begin{table}[]
    \centering
    \setlength{\tabcolsep}{5pt}
    \begin{tabular}{l  cc} \toprule
       Model  &  ImageNet-100 & ImageNet-1K \\ \midrule
       DeiT-small       & 80.6 &     79.8   \\ 
       \ + AGF    & \textbf{81.3} &  \textbf{80.3}   \\ \bottomrule
    \end{tabular}
    \caption{Comparison of performance for DeiT-small trained on ImageNet-100 and ImageNet-1K}
    \label{tab:vit_main}
\end{table}

\begin{figure}[t]
        \centering
        \subfigure[Filter response]{\includegraphics[width=0.47\columnwidth]{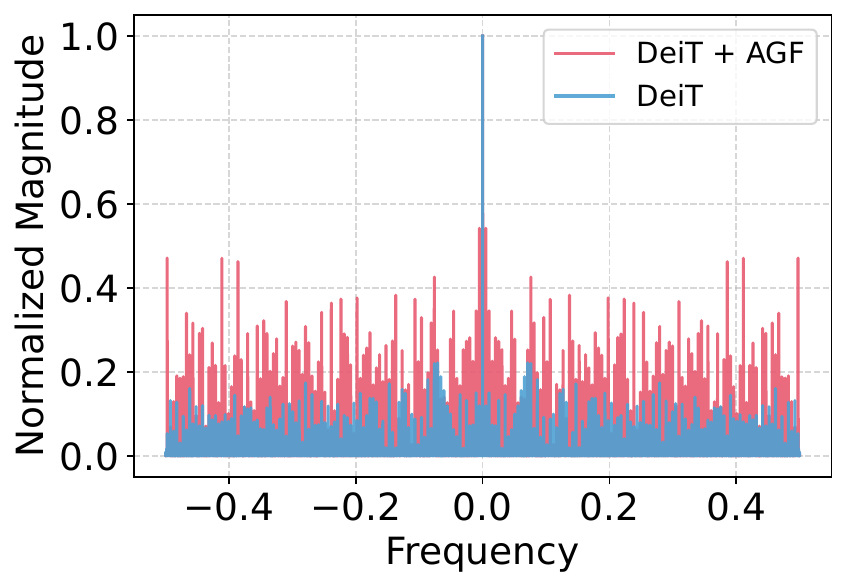}}
        \subfigure[Cosine similarity]{\includegraphics[width=0.47\columnwidth]{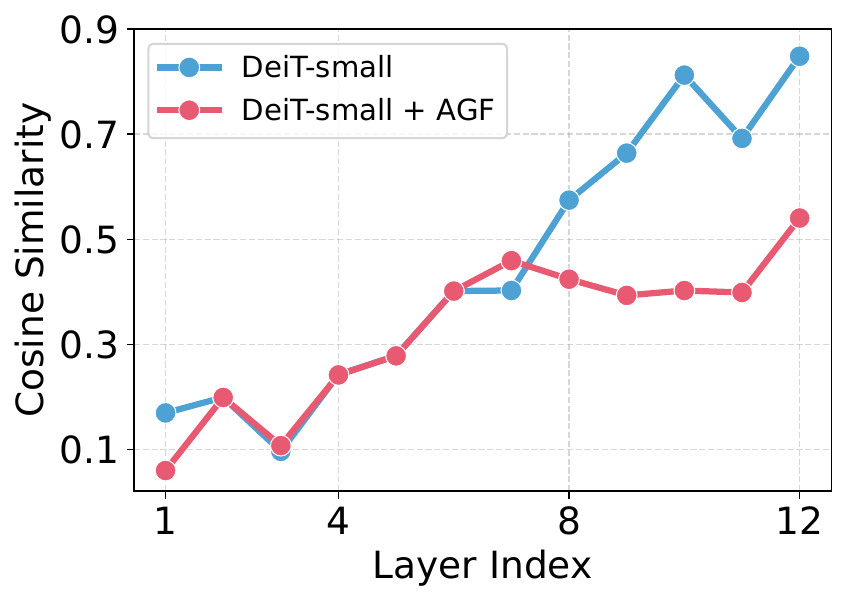}}
        \vspace{-1em}
        \caption{Filter response and cosine similarity on ImageNet-1k for DeiT-small and DeiT-small + AGF}
        \label{fig:oversmoothing_filter_response}
\end{figure}

\paragraph{Experimental settings.}\label{sec:exp_vit_setting}
We conduct additional experiments for image classification task with ImageNet-100~\cite{russakovsky2015imagenet100} and ImageNet-1K~\cite{deng2009imagenet1k} datasets and report top-1 accuracy. We adopt DeiT-small as the backbone, and trained from scratch with 300 epochs~\cite{touvron2021vittraining} with 2 GPU of NVIDIA RTX 3090 24GB. The detailed descriptions are in Appendix~\ref{app:exp_setting_vit}.

\paragraph{Experimental results.}
Table~\ref{tab:vit_main} shows the top-1 accuracy on ImageNet-100 and ImageNet-1k. Our AGF effectively learns the representation in deep layers model, which has 12 layers. Notably, plugging AGF improves the performance marginally, from 80.6 to 81.3 trained on ImageNet-100 datasets and from 79.8 to 80.3 on ImageNet-1K.

\paragraph{Analysis on mitigating over-smoothing problem.}
Deep Transformers, like GCNs, suffers from over-smoothing problem, where hidden representations become similar and indistinguishable to the last layer~\cite{kipf2017GCN,velickovic2018GAT,oono2020oversmoothing,rusch2023survey}. We previously demonstrated that the self-attention in Transformers acts as a low-pass filter attenuating high-frequency information, which is a major cause of over-smoothing~\cite{wang2022anti,shi2022revisiting,choi2024GFSA}. AGF mitigates this issue by effectively leveraging various scale frequency information through directly filtering signals in the singular value domain. Fig.~\ref{fig:oversmoothing_filter_response} (a) illustrates the frequency information in both the vanilla DeiT (i.e., $\mathbf{H}=\mathbf{\bar{A}}$) and DeiT + AGF (i.e., $\mathbf{H}=\mathbf{U(X)\Sigma(X)\mathbf{V}(X)^\intercal}$). Unlike the vanilla model, AGF better captures high-frequency information. Additionally, Fig.~\ref{fig:oversmoothing_filter_response} (b) shows the cosine similarity among hidden vectors at each layer. While the cosine similarity in DeiT increases to nearly 0.9 as layers deepen, it is moderated to nearly 0.5 in DeiT + AGF. Thus, AGF prevents over-smoothing in deep Transformers by effectively leveraging diverse frequency information.

\section{Conclusions}
We presented AGF, which interprets the self-attention as learning graph filters in the singular value domain from the perspective of directed graph signal processing. Since the self-attention matrix can be interpreted as a directed graph, we designed a more expressive self-attention using signals directly in the singular value domain. By learning the coefficients for various polynomial bases, AGF uses diverse frequencies. Our experiments showed that AGF outperforms baselines across various tasks, and the training time and GPU usage of AGF are comparable to baseline models with linear complexity. As a side contribution, AGF mitigates the over-smoothing problem in deep Transformers. 

Since our comparison scope is focused on linear Transformers, a limitation is the exploration and comparison regarding the recent state-space models~\cite{gu2021ssm,gu2023mamba}. Exploring the potential of our method in enhancing state-space models is an intriguing avenue for future work.

\clearpage

\section*{Acknowledgments}
This work was partly supported by 
Institute for Information \& Communications Technology Planning \& Evaluation (IITP) grants funded by the Korea government (MSIT) (No. RS-2022-II220113, Developing a Sustainable Collaborative Multi-modal Lifelong Learning Framework, and 80\% No. RS-2024-00457882, AI Research Hub Project, 10\%), 
and Samsung Electronics Co., Ltd. (No. G01240136, KAIST Semiconductor Research Fund (2nd), 10\%)

\bibliographystyle{named}
\bibliography{reference}

\clearpage

\appendix
\section{Reproducibility Statement}\label{app:reproducibility}
In an effort to ensure reproducibility, we report the description of dataset in Appendix~\ref{app:dataset_description}  and the best hyperparameters of our experiments in Appendix~\ref{app:best_hyperparameter}.

\section{Limitation}
The computational benefits of the linear complexity might diminish with extremely large datasets, where the overhead of managing large-scale data can still present challenges. The effectiveness of the approach may vary based on the nature of the input data. Data that do not fit well with the assumptions underlying our graph-based approach might result in sub-optimal performance.

\section{Broader Impacts}\label{app:broader_impacts}
Our research focuses on improving the efficiency of Transformers by introducing a mechanism to interpret self-attention as a graph filter and reduce it to linear time complexity. Efficient Transformers can have the positive effect of increasing model accessibility for device deployment and training for research purposes. They can also have environmental benefits by reducing carbon footprint. Therefore, our research has no special ethical or social negative implications compared to other key components of deep learning.

\section{Proof of Theorem~\ref{thm:sa_is_lpf}}\label{app:proof_sa_is_lpf}

\begin{customthm}[Self-attention is a low-pass filter]
Let $\mathbf{M}=\textrm{softmax}(\mathbf{Z})$ for any matrix $\mathbf{Z}\in\mathbb{R}^{n\times n}$. Then $\mathbf{M}$ inherently acts as a low pass filter. For all $\mathbf{x}\in \mathbb{R}^N$, in other words, $\text{lim}_{t\rightarrow \infty}\Vert \text{HFC}[\mathbf{M}^t(\mathbf{x})]\Vert_2/\Vert \text{LFC}[\mathbf{M}^t(\mathbf{x})]\Vert_2=0$
\end{customthm}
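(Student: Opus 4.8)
The plan is to reduce the statement to the Perron--Frobenius theorem applied to the row-stochastic matrix $\mathbf{M}$, together with the observation that the unique surviving direction under repeated application of $\mathbf{M}$ is the all-ones vector, which is exactly the DC Fourier mode $\bm{f}_1$.

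First I would record the two structural facts about $\mathbf{M}=\textrm{softmax}(\mathbf{Z})$: every entry is strictly positive, and every row sums to $1$, so $\mathbf{M}\mathbf{1}=\mathbf{1}$ where $\mathbf{1}=(1,\dots,1)^\intercal$. Thus $\mathbf{M}$ is a positive row-stochastic matrix, and by the Perron--Frobenius theorem for strictly positive matrices, $1$ is a simple eigenvalue of $\mathbf{M}$ equal to its spectral radius, its right eigenvector is $\mathbf{1}$, its left eigenvector is some $\boldsymbol{\pi}>0$ normalized so that $\boldsymbol{\pi}^\intercal\mathbf{1}=1$, and every other eigenvalue $\lambda$ satisfies $|\lambda|<1$. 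Consequently $\mathbf{M}^t\to\mathbf{1}\boldsymbol{\pi}^\intercal$ as $t\to\infty$, and writing $\mathbf{E}_t:=\mathbf{M}^t-\mathbf{1}\boldsymbol{\pi}^\intercal$ one gets the standard spectral estimate $\|\mathbf{E}_t\|_2=O\!\big(t^{\,m-1}\rho_2^{\,t}\big)\to 0$, where $\rho_2<1$ is the second-largest eigenvalue modulus and $m$ the largest Jordan-block size attached to those eigenvalues.

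Next I would connect this to the frequency decomposition. By definition $\bm{f}_1=\mathbf{1}/\sqrt{n}$, so $\mathbf{1}$ lies entirely in the low-frequency subspace for any $c\ge 1$: $\text{HFC}[\mathbf{1}]=\mathbf{0}$ and $\text{LFC}[\mathbf{1}]=\mathbf{1}$. Since $\text{HFC}[\cdot]$ and $\text{LFC}[\cdot]$ are linear and bounded (each is $\mathcal{F}^{-1}$ composed with a coordinate projection composed with $\mathcal{F}$), applying them to $\mathbf{M}^t\mathbf{x}=(\boldsymbol{\pi}^\intercal\mathbf{x})\mathbf{1}+\mathbf{E}_t\mathbf{x}$ gives $\text{HFC}[\mathbf{M}^t\mathbf{x}]=\text{HFC}[\mathbf{E}_t\mathbf{x}]$, hence $\|\text{HFC}[\mathbf{M}^t\mathbf{x}]\|_2\le\|\mathbf{E}_t\|_2\|\mathbf{x}\|_2\to 0$ exponentially, while $\text{LFC}[\mathbf{M}^t\mathbf{x}]\to(\boldsymbol{\pi}^\intercal\mathbf{x})\mathbf{1}$. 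When $\boldsymbol{\pi}^\intercal\mathbf{x}\neq 0$ the denominator converges to the strictly positive constant $|\boldsymbol{\pi}^\intercal\mathbf{x}|\sqrt{n}$, so the quotient tends to $0$, which is the claim.

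The step I expect to be the main obstacle is the degenerate locus $\boldsymbol{\pi}^\intercal\mathbf{x}=0$, on which both numerator and denominator vanish in the limit, so one must compare decay rates: on the $\mathbf{M}$-invariant subspace $\{\mathbf{x}:\boldsymbol{\pi}^\intercal\mathbf{x}=0\}$ the surviving modes are governed by subdominant eigenvectors, which need not carry any low-frequency content. I would therefore either phrase the conclusion for $\mathbf{x}$ with $\boldsymbol{\pi}^\intercal\mathbf{x}\neq 0$ (a dense, full-measure set, and the natural regime for the over-smoothing narrative, since it says every input with nonzero mass along $\mathbf{1}$ is driven toward the DC component), or restrict the quotient to the low-frequency part of $\mathbf{x}$; everything else is routine bookkeeping with operator norms and Perron--Frobenius.
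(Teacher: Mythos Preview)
Your approach is essentially the same as the paper's: both invoke Perron--Frobenius on the positive row-stochastic matrix $\mathbf{M}$ (the paper routes it through the Jordan form, you through the limiting projection $\mathbf{M}^t\to\mathbf{1}\boldsymbol{\pi}^\intercal$) to argue that $\mathbf{M}^t\mathbf{x}$ collapses onto the dominant eigendirection, which carries only low-frequency content. Your write-up is in fact more explicit than the paper's about \emph{why} the Perron direction $\mathbf{1}=\sqrt{n}\,\bm{f}_1$ is the DC Fourier mode, and the edge case $\boldsymbol{\pi}^\intercal\mathbf{x}=0$ you flag is a genuine gap that the paper's proof likewise does not resolve.
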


\begin{proof}
Given the matrix $\mathbf{M}$ normalized by softmax, $\mathbf{M}$ is a positive matrix and the row-sum of each element in $\mathbf{M}$ is equal to 1. Let the Jordan Canonical Form of $\mathbf{M}$ as $\mathbf{J}$ with the similarity transformation represented by the matrix $P$ as follows:
\begin{align}
    \mathbf{M} = \mathbf{PJP}^{-1},
\end{align}
where $\mathbf{J}$ is block diagonal with each block corresponding to an eigenvalue and its associated Jordan chains. According to the Perron-Frobenius theorem, the largest eigenvalue is real, non-negative and dominant. We denote this eigenvalue as $\lambda_1$.

Consider now the repeated application of $\mathbf{M}$:
\begin{align}
f^t(\mathbf{x}) = \mathbf{M}^t \mathbf{x} = (\mathbf{P} \mathbf{J} \mathbf{P}^{-1})^t \mathbf{x}.
\end{align}
By expanding this expression using the binomial theorem and considering the structure of Jordan blocks, we observe that for large $t$, the dominant term will be $\lambda_1$.  In comparison to the term with $\lambda_1$, other terms that involve smaller eigenvalues or higher powers of $t$ in Jordan blocks will become negligible over time.

In the frequency domain, expressing the transformation shows that high-frequency components attenuate faster than the primary low-frequency component. This occurs because the term $\lambda_1$ becomes overwhelmingly dominant as $t$ increases, causing other components to diminish.

Thus, within the context of our filter definitions, it becomes evident that:
\begin{align}
\lim_{t \rightarrow \infty} \frac{||\text{HFC}[f^t(\mathbf{x}) - \lambda_1^t \mathbf{v}_1]||_2}{||\text{LFC}[\lambda_1^t \mathbf{v}_1]||_2} = 0
\end{align}
Here, $\mathbf{v}_1$ is the generalized eigenvector corresponding to $\lambda_1$.

This behavior indicates a characteristic of a low-pass filter, reaffirming the low-pass nature of $\mathbf{M}$. Importantly, this is independent of the specific configurations of the input matrices $\mathbf{Z}$.
\end{proof}

\section{Proof of Theorem~\ref{thm:coeff}}\label{app:proof_coeff}
\begin{customthm2}
Assume the graph $G$ is connected. Let $\lambda_1\geq\lambda_2 \geq ... \geq  \lambda_n$ be the singular values of $\mathbf{\bar{A}}$. If $\theta_k \geq 0$ for $\forall k \in \{0,1,...,K\}$, $\sum_{k=0}^K \theta_k=1$ and $\exists k'> 0$ such that $\theta_{k'}> 0$, then $\left| g_\theta(\lambda_i)/g_\theta(\lambda_1)\right|<1$ for $\forall i \geq 2$. Also, if $\theta_k=(-\alpha)^k$ , $\alpha \in (0, 1)$ and $K \rightarrow \infty$, then $\left| \text{lim}_{K \rightarrow \infty} g_\theta(\lambda_i)/\text{lim}_{K \rightarrow \infty} g_\theta(\lambda_1) \right| > 1$ for $\forall i \geq 2$.
\end{customthm2}

\begin{proof}
For unfiltered case, the singular value ratio is $\left|g_\theta(\lambda_i)/g_\theta(\lambda_1)\right|=1$. Note that $\left|g_\theta(\lambda_i)/g_\theta(\lambda_1)\right|<1$ for $\forall i \geq 2$ implies that after applying the graph filter, the lowest frequency component $\lambda_1$  further dominates, which indicates the graph filter $g_\theta$ act as a low-pass filter. In contrast $\left|{\text{lim}_{K\rightarrow\infty} g_\theta(\lambda_i)}/{\text{lim}_{K\rightarrow\infty} g_\theta(\lambda_1)}\right|>1$ for $\forall i \geq 2$ indicates the lowest frequency component no longer dominates. This correspond to the high-pass filter case.

For low-pass filter result, from basic spectral analysis~\cite{von2007spectraltutorial}, we know that $\lambda_1=1$ and $\left|\lambda_i\right|<1$ for $\forall i \geq 2$.
By the assumption we know that

\begin{align}
    g_\theta(\lambda_1)=\sum_{k=0}^K \theta_k = 1
\end{align}

Then proving the low-pass filter results is equivalent to show $g_\theta(\lambda_i)<1$ for $\forall i \geq 2$. Since $\theta$ contains non-negative values and for $\forall k \geq 2$, $|\lambda|^k < 1$ because $|\lambda| < 1$, we have:

\begin{align}
g_\theta(\lambda_i)\leq \sum_{k=0}^{K}\theta_k|\lambda^k|=\sum_{k=0}^{K}\theta_k|\lambda|^k\overset{(a)}{\leq} \sum_{k=0}^{K}\theta_k = 1.
\end{align}

Since by assumption $\sum_{k=0}^K \theta_k=1$ and $\exists k' >0$ such that $\theta_{k'}>0$, (a) is a strict inequality $<$. Therefore, the ratio $\left|{g_\theta(\lambda_i)}/{g_\theta(\lambda_1)}\right| < 1$ for $\forall i \geq 2$, we prove the low-pass filter result.

For high-pass filter result, it is clarified that

\begin{align}
\text{lim}_{K\rightarrow\infty} g_\theta(\lambda)& =\text{lim}_{K\rightarrow\infty} \sum_{k=0}^K \theta_k\lambda^k\\&=\text{lim}_{K\rightarrow\infty} \sum_{k=0}^K (-\alpha\lambda)^k \\&=\frac{1}{1+\alpha\lambda}
\end{align}

Thus we have:
    
\begin{align}
\left|\frac{\text{lim}_{K\rightarrow\infty} g_\theta(\lambda_i)}{\text{lim}_{K\rightarrow\infty} g_\theta(\lambda_1)}\right|=\left| \frac{1+\alpha}{1+\alpha\lambda_i}\right|>1 
\end{align} for $\forall i \geq 2$.

Therefore, when the graph filter contains $\theta_k=(-\alpha)^k$ emphasizes the high-frequency components and acts as a hgih-pass filter.

This proof demonstrates that the characteristic of the grah filter, whether as a low-pass of high-pass filter, directly depends on the sign and values of the coefficients.
\end{proof}

\section{Jacobi Basis}
\label{app:jacobi}
Jacobi basis $\mathbf{B}_k^{a,b}(x)$ is recursively defined as follows:

\begin{align}    
\mathbf{B}_0^{a,b}(x)=1, \ 
\mathbf{B}_1^{a,b}(x)=\frac{a-b}{2}+\frac{a+b+2}{2}x.
\end{align}

For $k\geq 2$, it is defined as

\begin{align}
    \mathbf{B}_k^{a,b}(x)=(w_kx+w_k')\mathbf{B}_{k-1}^{a,b}(x)-w_k''\mathbf{B}_{k-2}^{a,b}(x),
\end{align} where

\begin{align}
    w_k &=\frac{(2k+a+b)(2k+a+b+1)}{2k(k+a+b)}, \\
    w_k' &=\frac{(2k+a+b-1)(a^2-b^2)}{2k(k+a+b)(2k+a+b-2)}, \\
    w_k'' &=\frac{(k+a-1)(k+b-1)(2k+a+b)}{k(k+a+b)(2k+a+b-2)},
\end{align}

The Jacobi bases $\mathbf{B}_k^{a,b}$ for $k=0,\cdots,j$ are orthogonal with respect to the weight function $(1-x)^a(1+x)^b$ in the interval $[-1, 1]$ with $a,b>-1$. Therefore, we use the Jacobi basis to stabilize the training of the coefficients.

\section{Dataset Description}\label{app:dataset_description}

\subsection{Long Range Arena Benchmark}\label{app:lra}
\begin{table}[h]
    \centering
    \begin{tabular}{l cccc}
    \toprule
        \bfseries  Datasets &  \#Train & \#Test & Lengths & \#Classes  \\ \midrule
        Listops      & 96,000 & 2,000 & 2K & 10 \\ 
        Text      & 25,000 & 25,000 & 4K & 2 \\ 
        Retrieval      & 147,086 & 17,437 & 4K & 2 \\ 
        Image      & 45,000 & 10,000 &  1K & 10 \\ 
        Pathfinder      & 160,000 & 20,000 &   1K & 2 \\ 
    \bottomrule     
    \end{tabular}
    \caption{Statistics of Long Range Arena benchmark datasets}
    \label{tab:lra_dataset}
\end{table}

We describe the statistics of Long Range Arena benchmark, including equation calculation (ListOps)~\cite{nangia2018listops}, review classification (Text)~\cite{maas2011text}, document retrieval (Retrieval)~\cite{radev2013retrieval}, image classification (Image)~\cite{krizhevsky2009image}, and image spatial dependencies (Pathfinder)~\cite{linsley2018pathfinder}. Long Range Arena (LRA) is a benchmark to systematically evaluate efficient Transformer models. The statistics of datasets are summarized in Table~\ref{tab:lra_dataset}. Also, the descriptions of datasets are as follows:

\begin{itemize}
    \item ListOps: The consists of summarization operations on a list of single-digit integers written in prefix notation. The entire sequence has a corresponding solution that is also a single-digit integer. Target task is a 10-way balanced classification problem.
    \item Text: The byte/character-level setup in order to simulate a longer input sequence. This task uses the IMDb reviews dataset, which is a commonly used dataset to benchmark document classification, with  a fixed max length of 4K. This is a binary classification task with accuracy.
    \item Retrieval: To evaluate the ability to encode and store compressed representations that are useful for matching and retrieval, the task is to learn the similarity score between two vectors. This is a binary classification task with accuracy as the metric.
    \item Image: This task is an image classification task, where the inputs are sequences of pixels which is flatten to a sequence of length as $n^2$. CIFAR-10 is used to image classfication.
    \item Pathfinder: The task evaluates a model to make a binary decision whether two points represented as circles are connected by a path consisting of dashes.
\end{itemize}

\subsection{UEA Time Series Classification}\label{app:uea}

\begin{table}[h]
    \small
    \centering
    \setlength{\tabcolsep}{1.8pt}
    \begin{tabular}{l ccccc}
    \toprule
        \bfseries  Datasets &  \#Train & \#Test & \#Features & Lengths & \#Classes  \\ \midrule
        EthanolConcentration     & 261 & 263 & 3 & 1,751 & 4 \\ 
        FaceDetection            & 5,890 & 3,524 & 144 & 62 & 2 \\ 
        HandWriting              & 150 & 850 & 3 & 152 & 26 \\ 
        Heartbeat                & 204 & 205 & 61 & 405 & 2 \\ 
        JapaneseVowels           & 270 & 370 & 12 & 29 & 9 \\ 
        PEMS-SF                  & 267 & 173 & 963 & 144 & 7 \\ 
        SelfRegulationSCP1       & 268 & 293 & 6 & 896 & 2 \\ 
        SelfRegulationSCP2       & 200 & 180 & 7 & 1,152 & 2 \\ 
        SpokenArabicDigits       & 6,599 & 2,199 & 13 & 93 & 10 \\ 
        UWaveGestureLibrary      & 120 & 320 & 3 & 315 & 8 \\ 
    \bottomrule     
    \end{tabular}
    \caption{Statistics of UEA time series classification benchmark datasets}
    \label{tab:uea_dataset}
\end{table}

We describe the statistics of the UEA time series classification datasets in Table~\ref{tab:uea_dataset}.
\begin{itemize}
    \item EthanolConcentration: A dataset of raw spectra of water-and-ethanol solu- tions in 44 distinct, real whisky bottles.
    \item FaceDetection: A dataset consisting of MEG recordings and class labels (Face/Scramble)
    \item HandWriting: A motion dataset taken from a smartwatch while the subject writes 26 letters of the alphabet generated by UCR.
    \item Heartbeat: Recordings of heart sounds collected from both healthy subjects and pathological patients in clinical or non-clinical settings.
    \item JapaneseVowels: A dataset where nine Japanese-male speakers were recorded saying the vowels ‘a’ and ‘e’.
    \item PEMS-SF: A UCI dataset from the California Department of Transportation19 with 15 months worth of daily data from the California Department of Transportation PEMS website. 
    \item SelfRegulationSCP1: A dataset is Ia in BCI II 12: Self-regulation of slow cortical potentials. 
    \item SelfRegulationSCP2: A dataset is Ib in BCI II 12: Self-regulation of slow cortical potentials. 
    \item SpokenArabicDigits: A dataset taken from 44 males and 44 females Arabic native speakers between the ages 18 and 40 to represent ten spoken Arabic digits.
    \item UWaveGestureLibrary: A set of eight simple gestures generated from accelerometers.
\end{itemize}

\section{Experimental Setting}\label{app:exp_setting}
\subsection{UEA Time Series Classification}\label{app:exp_setting_uea}
We compare our model with vanilla Transformer and linear complexity Transformers, including Reformer~\cite{kitaev2020reformer}, Linformer~\cite{wang2020linformer}, Nystr{\"o}mformer~\cite{xiong2021nystromformer}, Longformer~\cite{beltagy2020longformer}, YOSO-E~\cite{zeng2021yosoe}, and PirmalFormer~\cite{chen2023primal}. We grid search the order of polynomial $K$ in \{2,3,4,5\}, $a$ in \{1.0, 1.5, 2.0\}, $b$ in \{-2.0, -1.5, $\cdots$, 1.5, 2.0\}, and $\gamma$ in \{\num{1e-01}, \num{1e-02}, \num{1e-03}, \num{1e-04}\}.

\subsection{Long Range Arena Benchmark}\label{app:exp_setting_lra}
We compare our model with vanilla Transformer and linear complexity Transformers, including LinearTransformer~\cite{katharopoulos2020lineartransformer}, Reformer~\cite{kitaev2020reformer}, Performer~\cite{choromanski2020performer}, Longformer~\cite{beltagy2020longformer}, YOSO-E~\cite{zeng2021yosoe}, Cosformer~\cite{qin2022cosformer}, SOFT~\cite{lu2021soft}, Flowformer~\cite{wu2022flowformer}, and PirmalFormer~\cite{chen2023primal}. We grid search the order of polynomial $K$ in \{2,3,4,5\}, $\gamma$ in \{\num{1e-01}, \num{1e-02}, \num{1e-03}, \num{1e-04}\}, and learning rate $\eta$ in \{\num{1e-3},\num{5e-4},\num{1e-4}, \num{5e-5}\}.

\subsection{Vision Transformer}\label{app:exp_setting_vit}
Many recent works do not apply the proposed method to all layers, but adjust the number of layers to apply the proposed method~\cite{patro2023spectformer,chen2023primal}.
Therefore, we set the number of applied layers $L$ as a
hyperparameter. We conduct experiments with grid search on the order of polynomial $K$ in \{3,4\}, $\gamma$ in \{\num{1e-03}, \num{1e-04}\}, and the number of layers $L$ in \{1, 3, 6\}.

\begin{table}[h!]
    \centering
    \small
    \begin{tabular}{l cccc}
    \toprule
          Datasets &  $K$ & $\gamma$  & $a$ & $b$ \\ \midrule
        EthanolConcentration     & 6 & 0.1 & 1.5 & -1.5  \\ 
        FaceDetection            & 9 & 0.1 & 2.0 & 0.5  \\ 
        HandWriting              & 3 & 0.01 & 1.0 & 0.2   \\ 
        Heartbeat                & 6 & 0.01 & -0.5 & -0.5   \\ 
        JapaneseVowels           & 4 & 0.01 & 0.0 & 0.0   \\ 
        PEMS-SF                  & 5 & 0.1 & 1.5 & 2.0   \\ 
        SelfRegulationSCP1       & 9 & 0.1 & 1.5 & 2.0   \\ 
        SelfRegulationSCP2       & 7 & 0.1 & 2.0 & 1.5   \\ 
        SpokenArabicDigits       & 4 & 0.1 & 1.0 & 0.0   \\ 
        UWaveGestureLibrary      & 10 & 0.1 & 2.0 & 1.5   \\ 
    \bottomrule     
    \end{tabular}
    \caption{Best hyperparameters for AGF in UEA time series classification}
    \label{tab:uea_hyperparameter}
\end{table}
			
\begin{table}[h!]
\centering
        \small
        \setlength{\tabcolsep}{3pt}
        \begin{tabular}{l ccccc}
        \toprule
             Datasets  & $K$ & $\gamma$ & $a$ & $b$ & $\eta$ \\ \midrule
     Text &  5 & 0.001 & 1.0 & 1.0 & \num{5e-05}   \\
     Image &   7 & 0.01 & 2.0 & -1.0 & \num{1e-3}  \\
     Retrieval &   5 & 0.1 & 1.0 & 1.0 & \num{5e-4}  \\
     ListOps &   5 & 0.1 & 1.5 & 1.0 & \num{1e-3}  \\
     Pathfinder &   3 & 0.0001 & 1.5 & -1.0 & \num{1e-4}  \\
        \bottomrule     
        \end{tabular}
        \caption{Best hyperparameters for AGF in Long Range Arena benchmark}
        \label{tab:lra_hyperparameter}
        
\end{table}
\begin{table}[h!]
    \setlength{\tabcolsep}{2.5pt}
    \centering
    \small
    \begin{tabular}{l ccccccc}
    \toprule
        Datasets  & $K$ & $\gamma$ & $L$ & a & b\\ \midrule
        ImageNet-100     & 4 & 0 & 1 & -0.5 & 0.0 \\ 
        ImageNet-1k      & 3 & 1e-4 & 6 & -0.5 & 0.0 \\ 
    \bottomrule     
    \end{tabular}
    \caption{Best hyperparameters for AGF in image classification}
    \label{tab:vit_hyperparameter}
\end{table}

\section{Best Hyperparameters}\label{app:best_hyperparameter}
Tables~\ref{tab:uea_hyperparameter}-\ref{tab:vit_hyperparameter} show the best hyperparameters used in our experiments.

\section{Significance Test}
We perform a significance test on the UEA time series classification results, as shown in Table~\ref{tab:sig_uea}. Specifically, we use the Wilcoxon signed-rank test, a non-parametric statistical method for comparing differences between two related samples. A $p$-value below 0.05 indicates that the result is statistically significant, suggesting that the observed differences are unlikely to be due to chance.

\begin{table}[!h]
    \centering
    \small
    \begin{tabular}{l ccccc}
    \toprule
    \multirow{4}{*}{AGF v.s.}   & Trans. & Linear. & Re. & Long. & Per.  \\ 
               & 0.002  & 0.010 & 0.002    & 0.049      & 0.002       \\  \cmidrule{2-6}
               & YOSO-E & Cos. & SOFT & Flow. & Primal. \\
               & 0.002  & 0.002     & 0.01 & 0.004      & 0.008 \\ \bottomrule  
\end{tabular}
\caption{$p$-value using Wilcoxon signed-rank}
\label{tab:sig_uea}
\end{table}

\section{Sensitivity Studies}\label{app:full_sens}
In this section, we report the sensitivity studies on UEA time series classification and LRA benchmark for all datasets. Tables~\ref{tab:uea_k_sens_full} and~\ref{tab:lra_k_sens_full} display the comprehensive results of the impact of the polynomial order $K$. 

\subsection{Effect of the Order of Polynomial $K$}\label{app:sens_K}

\begin{table}[!h]
    \setlength{\tabcolsep}{1.pt}
    \centering
    \begin{tabular}{l ccccccccc}
    \toprule
    $K$     & 3 & 4 & 5 & 6 & 7 & 8 & 9 & 10 \\ \midrule
EC      & 32.3  & 31.6  & 33.5  & \textbf{36.1}  & 31.9  & 29.7  & 30.8  & 32.3  \\
FD      & 68.2  & 68.8  & 68.1  & 68.3  & 68.2  & 68.3  & \textbf{69.9}  & 67.5  \\
HW      & \textbf{33.5}  & 29.9  & 33.4  & 33.2  & 33.2  & 32.6  & 32.5  & 29.6  \\
HB      & 77.6  & 77.6  & 77.6  & \textbf{79.0}  & 77.6  & 77.6  & 77.6  & 77.6  \\
JV      & 98.9  & \textbf{99.5}  & 98.9  & 98.9  & 99.2  & 98.9  & 99.2  & 98.9  \\
PEMS-SF & 86.7  & 89.6  & \textbf{91.3}  & 83.8  & 86.1  & 85.0  & 83.8  & 87.3  \\
SRSCP1  & 91.1  & 92.2  & 89.4  & 91.1  & 90.8  & 89.4  & \textbf{93.5}  & 91.1  \\
SRSCP2  & 55.6  & 55.6  & 56.1  & 56.7  & \textbf{58.9}  & 56.1  & 56.1  & 56.1  \\
SAD     & 100.0 & 100.0 & 100.0 & 100.0 & 100.0 & 100.0 & 100.0 & 100.0 \\
UWGL    & 84.1  & 84.1  & 86.9  & 84.4  & 84.1  & 85.0  & 86.2  & \textbf{89.4} \\
    \bottomrule     
    \end{tabular}
    \caption{Effect of $K$ on UEA Time Series classification} 
    \label{tab:uea_k_sens_full}
\end{table}

\begin{table}[!h]
    \centering
    \begin{tabular}{l ccccccc}
    \toprule
    $K$ & ListOps & Text & Retrieval & Image & Pathfinder  \\ \midrule
        3.0 & 37.0 & 64.5 &  80.6 & 42.3  & 74.0 \\
       4.0 & 37.5 & 64.3 &  79.7 & 40.8 & 74.0 \\ 
     5.0 & 38.0 & 64.7 & 81.4 & 41.4 & 72.4 \\
     6.0 & 37.0 & 64.5 & 80.6 & 42.3 & 71.4 \\
      7.0 & 37.3 & 63.9 & 80.0 & 42.4 & 72.6  \\
    \bottomrule     
    \end{tabular}
    \caption{Effect of $K$ on Long Range Arena benchmark} 
    \label{tab:lra_k_sens_full}
\end{table}

We report the sensitivity results for the order of polynomial $K$ on UEA time series classification and LRA benchmark for all datasets. The results are summarized in Tables~\ref{tab:uea_k_sens_full} and ~\ref{tab:lra_k_sens_full}.

\subsection{Effects on the $a$ and $b$ of Jacobi Polynomial}\label{app:sens_a_b}

We conduct the senstivity studies on the effect of the parameters $a$ and $b$ in Jacobi polynomial. Tables~\ref{tab:lra_a_sens} and~\ref{tab:lra_b_sens} summarize the results on UEA time series classification and LRA benchmark for all datasets.

\begin{table}[!h]
    \centering
    \setlength{\tabcolsep}{1.5pt}
    \begin{tabular}{l ccccc}
    \toprule
    $a$ & ListOps & Text & Retrieval & Image & Pathfinder \\ \midrule
      1.0 & 36.8 & 64.7 & 81.4 & 41.8 & 72.5 \\
      1.5 & 38.0 & 64.3 & 80.7 & 41.6 & 74.0 \\
      2.0 & 36.9 & 64.2  & 80.4 & 42.4 & 73.4 \\
    \bottomrule     
    \end{tabular}
    \caption{Effect of $a$ on LRA benchmark} 
    \label{tab:lra_a_sens}
\end{table}

\begin{table}[!h]
    \small
        \centering
        \setlength{\tabcolsep}{1.5pt}
        \begin{tabular}{l ccccc}
        \toprule
        $b$ & ListOps & Text & Retrieval & Image & Pathfinder\\ \midrule
      -2.0 & 37.1 & 64.1 &  81.3 & 40.9  & 73.4 \\
  -1.0 & 37.6 & 64.3 & 80.8 & 42.4 & 74.0 \\
    1.0 & 38.0 & 64.7 & 81.4 & 41.6 & 70.6  \\
       2.0 & 37.0 & 64.5 &  81.1 & 42.3 &  72.7 \\
        \bottomrule     
        \end{tabular}
        \caption{Effect of $b$ on LRA benchmark} 
        \label{tab:lra_b_sens}
\end{table}

\section{Ablation Studies}\label{app:ablation}

\subsection{Effect of Activation Function $\rho$}
\begin{table}[h]
    \centering
    \small
    \setlength{\tabcolsep}{2pt}
    \begin{tabular}{l ccccc }
    \toprule
      & ListOps &  Text & Retrieval & Image & Pathfinder \\ \midrule
        None     & 17.8 & 63.4 & 75.3 & 41.1 & 70.6 \\
        Sigmoid  & 17.7 & 63.1 & 66.3 & 40.7 & 72.3 \\
        Tanh     & 17.8 & 63.6 & 77.2 & 42.2 & 69.0  \\
        \textbf{Softmax} & 38.0 & 64.7 & 81.4  & 42.4 & 74.0 \\
        \bottomrule     
    \end{tabular}
    \caption{Ablation study on $\rho$ in Eqs.~\eqref{eq:singular} and~\eqref{eq:singular2}} 
    \label{tab:abla_act}
\end{table}
We conduct an ablation study on the activation function $\rho$ applied to the learnable singular vectors $U(\mathbf{X})$ and $V(\mathbf{X})$. In Table~\ref{tab:abla_act}, `None' indicates that no activation function is applied to the generated singular vectors, `Sigmoid' and `Tanh' correspond to the use of the sigmoid and hyperbolic tangent functions, respectively, and `Softmax' represents the softmax function used in our proposed model. Consistently, the softmax function demonstrates optimal performance. Since the generated singular vectors serve as the basis for signals in the graph filter, the choice of an appropriate activation function significantly impacts the learning process of the graph filter.

\subsection{Effect of Polynomial Type}

\begin{table*}[t]
\centering
\resizebox{0.98\textwidth}{!}{%
\label{tab:abl_poly_type_full}
\begin{tabular}{l  cccccccccc  c}\toprule
                  & \multicolumn{1}{l}{EC} & \multicolumn{1}{l}{FD} & \multicolumn{1}{l}{HW} & \multicolumn{1}{l}{HB} & \multicolumn{1}{l}{JV} & \multicolumn{1}{l}{PEMS-SF} & \multicolumn{1}{l}{SRSCP1} & \multicolumn{1}{l}{SRSCP2} & \multicolumn{1}{l}{SAD} & \multicolumn{1}{l}{UWGL} & \multicolumn{1}{c}{Average} \\ \midrule
 
AGF(Monomial)               & 29.7 & 67.0 & 32.4 & 77.6 & 98.6  & 86.7 & 90.8 & 55.6 & 100.0 & 83.8 & 72.2 \\
AGF(Jacobi)               & \textbf{36.1} & \textbf{69.9} & \textbf{33.5} & \textbf{79.0}& \textbf{99.5} & \textbf{91.3} & \textbf{93.2} & \textbf{58.9} & \textbf{100.0} & \textbf{89.4} & \textbf{75.1} \\ \bottomrule
\end{tabular}
}
\caption{Ablation study on polynomial type}
\end{table*}

We perform ablation studies on the type of polynomial $T$ and results are shown in Table~\ref{tab:abla_act}. `Monomial' indicates that we adopt the monomial basis, and `Jacobi' means the Jacobi basis as used in AGF. The Jacobi basis consistently outperforms the Monomial basis, which indicates that the orthogonal basis is stable in terms of convergence and leads to better performance.

\subsection{Effect of the Graph Filter}
\begin{table*}[!h]
\centering
\begin{tabular}{l  cccccccccc  c}\toprule
                  & \multicolumn{1}{l}{EC} & \multicolumn{1}{l}{FD} & \multicolumn{1}{l}{HW} & \multicolumn{1}{l}{HB} & \multicolumn{1}{l}{JV} & \multicolumn{1}{l}{PEMS-SF} & \multicolumn{1}{l}{SRSCP1} & \multicolumn{1}{l}{SRSCP2} & \multicolumn{1}{l}{SAD} & \multicolumn{1}{l}{UWGL} & \multicolumn{1}{c}{Average} \\ \midrule

$\mathbf{H}_{UV^\intercal}$  & 29.7 & 66.6 & 28.2 & 76.6 & 98.4 & 87.3 & 90.8 & 56.1 & 99.9 & 83.8 & 71.7 \\
$\mathbf{H}_{SVD}$ & 33.1 & 67.1 & 27.1 & 75.1 & 98.4  &   88.4  &  89.8  &  56.1 & 100.0 & 85.9 &  72.1\\
AGF            & \textbf{36.1} & \textbf{69.9} & \textbf{33.5} & \textbf{79.0}& \textbf{99.5} & \textbf{91.3} & \textbf{93.2} & \textbf{58.9} & \textbf{100.0} & \textbf{89.4} & \textbf{75.1} \\ \bottomrule
\end{tabular}
\caption{Ablation study on the graph filter}
\label{tab:abl_sigma_full}
\end{table*}

We conduct an ablation study on the signal filtering in main text, especially in subsection~\ref{subsec:abla}. In this subsection, we show the full results in Table~\ref{tab:abl_sigma_full}. our AGF with appropriate graph signal filtering enhances the self-attention in Transformers, resulting in a significant improvement in average performance.

\section{Empirical Runtime}\label{app:runtime_uea}

\begin{table*}[h]
    \small
    \centering
    \setlength{\tabcolsep}{0.8pt}
    \begin{tabular}{l cccccccccc   }\toprule
    & EC & FD & HW & HB & JV & PEMS-SF & SRSCP1 & SRSCP2 & SAD & UWGL \\ \midrule
    Trans.  & 3.7/13.88 & 7.3/0.21 & 0.2/0.44 & 0.5/1.23 & 0.3/0.14 & 0.4/0.41 & 1.5/4.05 & 1.5/6.36 & 3.0/0.22 & 0.2/0.91\\
    Flow. & 2.1/3.75 & 8.4/0.24 & 0.4/0.47 & 0.5/0.95 & 0.6/0.16 & 0.4/0.43 & 1.2/1.96 & 1.1/2.49 & 4.8/0.25 & 0.3/0.78 \\ 
    Primal. & 2.1/3.37  & 10.8/0.22  & 0.3/0.44 & 0.5/0.89 & 0.6/0.15 & 0.5/0.43 & 1.3/1.78 & 1.1/2.16 & 3.2/0.23 & 0.2/0.71 \\ \midrule
    AGF     & 2.7/4.79 & 10.6/0.32 & 0.3/0.52 & 0.5/0.99  & 0.6/0.18 & 0.5/0.51 & 1.7/2.27 & 1.5/3.31 & 4.5/0.28 & 0.3/1.14 \\ \bottomrule
    \end{tabular}
    \caption{Comparison of running time (s/epoch) and the peak training memory usage (GB) on UEA time series benchmark}
    \label{tab:runtime_uea}
\end{table*}

We report the empirical runtime on UEA time series classification and the result is reported in Table~\ref{tab:runtime_uea}. It shows comparable efficiency when compared to existing transformers and transformers with linear complexity. The results for LRA benchmark is reported in main text, specifically in Table~\ref{tab:runtime_lra}.

\end{document}